\newcommand{\our}{PinnerSage\xspace}
\newcommand{\hide}[1]{}
\newcommand{\xhdr}[1]{\vspace{2mm}{\noindent\bfseries #1}.}
\begin{document}
\fancyhead{}

\title{PinnerSage: Multi-Modal User Embedding Framework for Recommendations at Pinterest}

\author{Aditya Pal*, Chantat Eksombatchai*, Yitong Zhou*, Bo Zhao, Charles Rosenberg, Jure Leskovec}
\thanks{*These authors contributed equally}
\affiliation{%
  \institution{Pinterest Inc.}
}
\email{{apal, pong, yzhou, bozhao, crosenberg, jure}@pinterest.com}

\begin{CCSXML}
<ccs2012>
<concept>
<concept_id>10002951.10003227.10003351.10003444</concept_id>
<concept_desc>Information systems~Clustering</concept_desc>
<concept_significance>300</concept_significance>
</concept>
<concept>
<concept_id>10002951.10003227.10003351.10003445</concept_id>
<concept_desc>Information systems~Nearest-neighbor search</concept_desc>
<concept_significance>300</concept_significance>
</concept>
<concept>
<concept_id>10002951.10003317.10003347.10003350</concept_id>
<concept_desc>Information systems~Recommender systems</concept_desc>
<concept_significance>500</concept_significance>
</concept>
<concept>
<concept_id>10002951.10003260.10003261.10003271</concept_id>
<concept_desc>Information systems~Personalization</concept_desc>
<concept_significance>500</concept_significance>
</concept>
</ccs2012>
\end{CCSXML}

\ccsdesc[500]{Information systems~Recommender systems}
\ccsdesc[500]{Information systems~Personalization}
\ccsdesc[300]{Information systems~Clustering}
\ccsdesc[300]{Information systems~Nearest-neighbor search}

\keywords{Personalized Recommender System; Multi-Modal User Embeddings}

\renewcommand{\shortauthors}{Aditya Pal, et al}

\begin{abstract}
Latent user representations are widely adopted in the tech industry for powering personalized recommender systems. Most prior work infers a \textit{single} high dimensional embedding to represent a user, which is a good starting point but falls short in delivering a full understanding of the user's interests. In this work, we introduce PinnerSage, an end-to-end recommender system that represents each user via multi-modal embeddings and leverages this rich representation of users to provides high quality personalized recommendations. PinnerSage achieves this by clustering users' actions into conceptually coherent clusters with the help of a hierarchical clustering method (Ward) and summarizes the clusters via representative pins (Medoids) for efficiency and interpretability. PinnerSage is deployed in production at Pinterest and we outline the several design decisions that makes it run seamlessly at a very large scale. We conduct several offline and online A/B experiments to show that our method significantly outperforms single embedding methods.
\end{abstract}

\maketitle

{\fontsize{8pt}{8pt} \selectfont
\textbf{ACM Reference Format:}\\
Aditya Pal, Chantat Eksombatchai, Yitong Zhou, Bo Zhao, Charles Rosenberg, Jure Leskovec. 2020. PinnerSage: Multi-Modal User Embedding Framework for Recommendations at Pinterest. In \textit{Proceedings of the 26th ACM SIGKDD Conference on Knowledge Discovery \& Data Mining (KDD '20), August 23--27, 2020, Virtual Event, CA, USA.} ACM, New York, NY, USA, 10 pages. https://doi/10.1145/3394486.3403280}

\section{Introduction}
Pinterest is a content discovery platform that allows 350M+ monthly active users to collect and interact with 2B+ visual bookmarks called \textit{pins}. Each pin is an image item associated with contextual text, representing an idea that users can find and bookmark from around the world. Users can save pins on \textit{boards} to keep them organized and easy to find. With billions of pins on Pinterest, it becomes crucial to help users find those ideas (Pins) which would spark inspiration. Personalized recommendations thus form an essential component of the Pinterest user experience and is pervasive in our products. 
The Pinterest recommender system spans a variety of algorithms that collectively define the experience on the homepage. Different algorithms are optimized for different objectives and include~-~
(a) homefeed recommendations where a user can view an infinite recommendation feed on the homepage (as shown in Figure~\ref{fig:pinterest}), (b) shopping product recommendations which link to 3rd party e-commerce sites, (c) personalized search results, (d) personalized ads, (e) personalized pin board recommendations, etc. Therefore, it becomes necessary to develop a universal, shareable and rich understanding of the user interests to power large-scale cross-functional use cases at Pinterest.

\begin{figure}[t]
\centering
\includegraphics[scale=0.37]{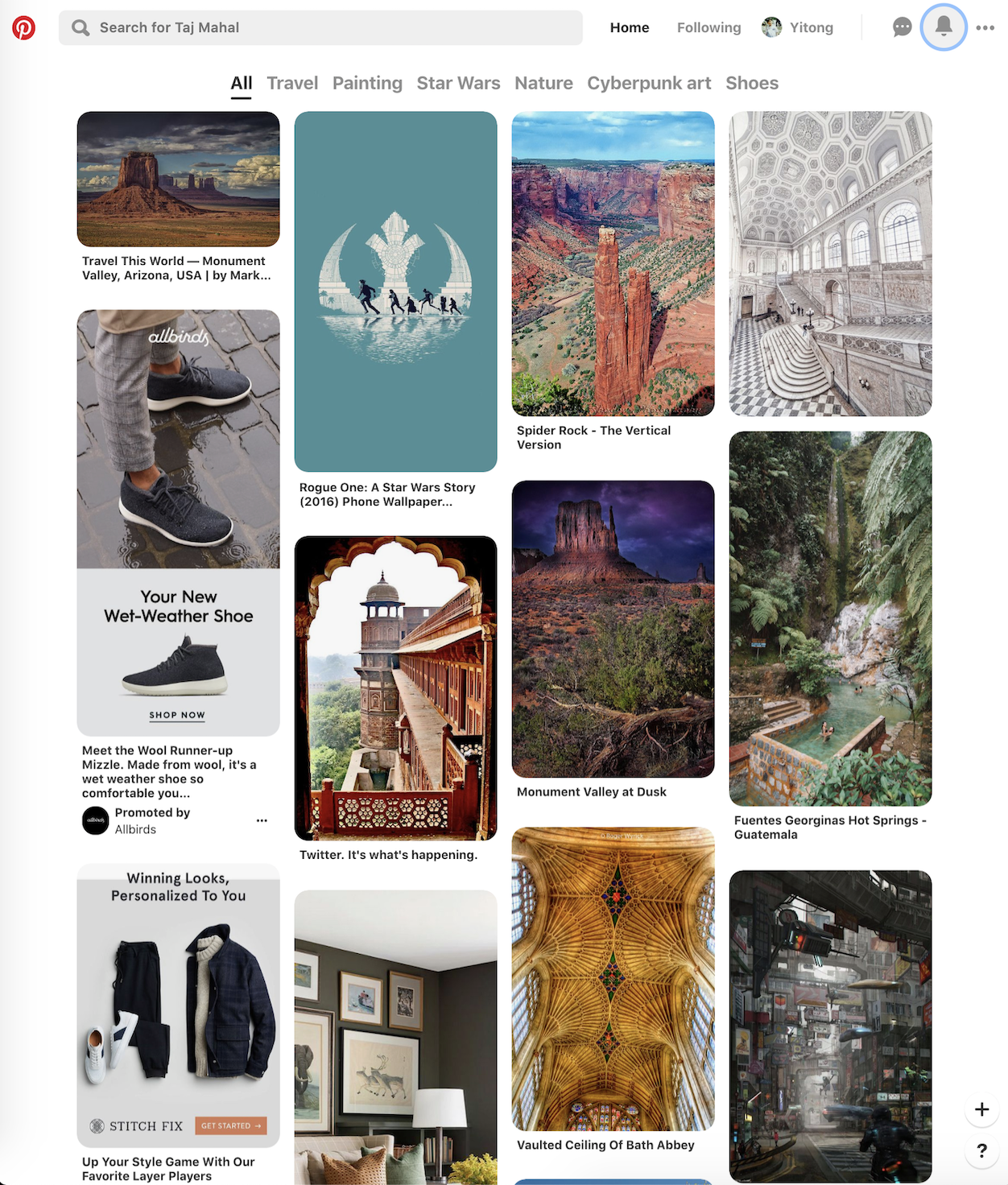}
\caption{Pinterest Homepage.}
\label{fig:pinterest}
\end{figure}

Latent user representation methods have become increasingly important in advancing our understanding of users. 
They are shown to be effective at powering collaborative filtering techniques \cite{itemcf, amazonitemcf}, 
and serving as features in ranking models~\cite{widedeep, starspace, dlyoutube, htcn}. 
Due to their efficacy, user embeddings are widely adopted in various industry settings.
They are used to power YouTube and Google play recommendations \cite{dlyoutube, widedeep}, 
personalize search ranking and similar listing recommendations at Airbnb \cite{airbnb}, 
recommend news articles to users \cite{yahoornn}, 
connect similar users at Etsy~\cite{etsy}, etc. 

Building an effective user embedding system that provides personalized recommendations to hundreds of millions of users from a candidate pool of billions of items has several inherent challenges.
\textit{The foremost challenge is how to effectively encode multiple facets of a user?}
A user can have a diverse set of interests with no obvious linkage between them.
These interests can evolve, with some interests persisting long term while others spanning a short time period.
Most of the prior work aims to capture the rich diversity of a user's actions and interests via a single high-dimensional embedding vector.
Typically, items to be recommended are also represented in the same embedding space.
This is initially satisfying, but as pointed by research work described in \cite{microprofile, westonmultiintr, alibabaKdd2019, epastomultipersona}, a good embedding must encode user's multiple tastes, interests, styles, etc., whereas an item (a video, an image, a news article, a house listing, a pin, etc) typically only has a single focus. 
Hence, an attention layer \cite{zhang2018attention} or other context adapting approaches is needed to keep track of the evolving interests of the users.

One alternative that has shown promise is to represent a user with multiple embeddings, with each embedding capturing a specific aspect of the user.
As shown by \cite{westonmultiintr}, multi-embedding user representations can deliver 25\% improvement in YouTube video recommendations.
\cite{alibabaKdd2019} also shows reasonable gains on small benchmark datasets.
However, multi-embedding models are not widely adopted in the industry due to several important questions and concerns that are not yet fully addressed by prior work:
\begin{itemize}
    \item How many embeddings need to be considered per user?
    \item How would one run inference at scale for hundreds of millions of users and update the embeddings ?
    \item How to select the embeddings to generate personalized recommendations?
    \item Will the multi-embedding models provide any significant gains in a production setting?
\end{itemize}
Most of the prior multi-embedding work side-steps these challenges by either running only small-scale experiments and not deploying these techniques in production or by limiting a user to very few embeddings, thereby restricting the utility of such approaches.

\xhdr{Present Work}
In this paper, we present an end-to-end system, called \our, that is deployed in production at Pinterest.
\our is a highly scalable, flexible and extensible recommender system that internally represents each user with multiple PinSage~\cite{pinsage} embeddings.
It infers multiple embeddings via hierarchical clustering of users' actions into conceptual clusters and uses an efficient representation of the clusters via medoids. Then, it employs a highly efficient nearest neighbor system to power candidate generation for recommendations at scale.
Finally, we evaluate \our extensively via offline and online experiments. 
We conduct several large scale A/B tests to show that PinnerSage based recommendations result in significant engagement gains for Pinterest's homefeed, and shopping product recommendations

\section{PinnerSage Design Choices}
\label{sec:designchoices}
We begin by discussing important design choices of \our.

\xhdr{Design Choice 1: Pin Embeddings are Fixed}
Most prior work, jointly learns user and item embeddings (e.g. ~\cite{dlyoutube, airbnb}).
This causes inherent problems in large-scale applications, such that it unnecessarily complicates the model, slows the inference computation, and brings in difficulty for real-time updates.
Besides these, we argue that it can often lead to less desirable side-effects.
Consider the toy example in Figure~\ref{fig:curse_of_single_embedding}, where a user is interested in \textit{painting}, \textit{shoes}, and \textit{sci-fi}.
Jointly learnt user and pin embeddings would bring pin embeddings on these disparate topics closer, which is actually what we wish to \textit{avoid}. Pin embeddings should only operate on the underlying principle of bringing similar pins closer while keeping the rest of the pins as far as possible.
For this reason, we use PinSage~\cite{pinsage}, which precisely achieves this objective without any dilution. 
PinSage is a unified pin embedding model, which integrates visual signals, text annotations, and pin-board graph information to generate high quality pin embeddings.
An additional advantage of this design choice is that it considerably simplifies our downstream systems and inference pipelines.


\begin{figure}[t]
\centering
\includegraphics[scale=0.23]{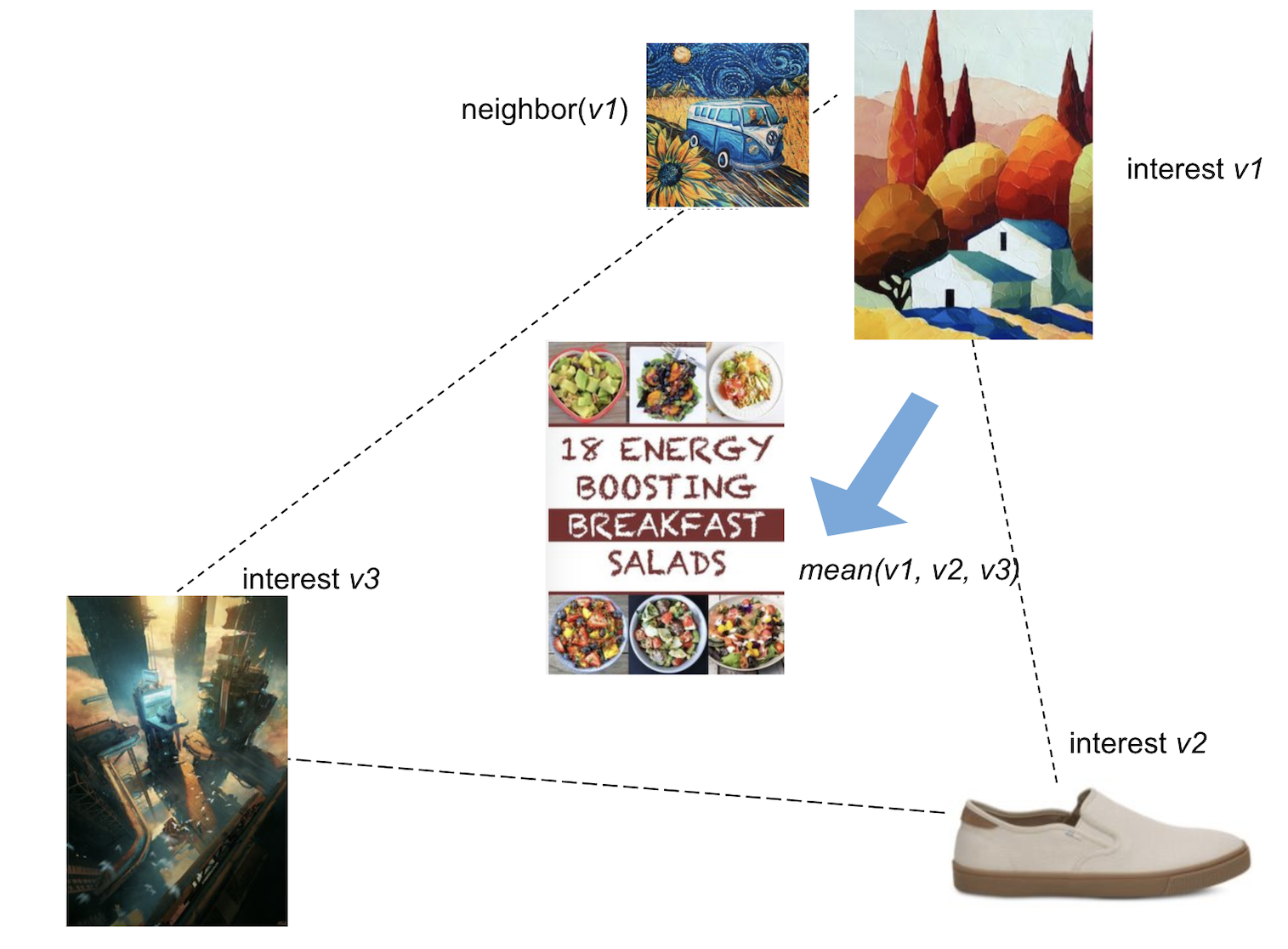}
\caption{Pins of 256-dimensional embeddings visualized in 2D. These pins depict three different interests of a user.}
\label{fig:curse_of_single_embedding}
\vspace{-0.2cm}
\end{figure}

\xhdr{Design Choice 2: No Restriction on Number of Embeddings}
Prior work either fixes the number of embeddings to a small number~\cite{westonmultiintr} or puts an upper bound on them~\cite{alibabaKdd2019}.
Such restrictions at best hinders developing a full understanding of the users and at worst merges different concepts together leading to bad recommendations. 
For example, merging item embeddings, which is considered reasonable (see~\cite{dlyoutube, htcn}), could yield an embedding that lies in a very different region.
Figure~\ref{fig:curse_of_single_embedding} shows that a merger of three disparate pin embeddings results in an embedding that is best represented by the concept \textit{energy boosting breakfast}.
Needless to say, recommendations based on such a merger can be problematic.

Our work allows a user to be represented by as many embeddings as their underlying data supports. 
This is achieved by clustering users' actions into conceptually coherent clusters via a hierarchical agglomerative clustering algorithm (Ward).
A light user might get represented by 3-5 clusters, whereas a heavy user might get represented by 75-100 clusters. 

\xhdr{Design Choice 3: Medoids based Representation of Clusters} Typically, clusters are represented by centroid, which requires storing an embedding. Additionally, centroid could be sensitive to outliers in the cluster.
To compactly represent a cluster, we pick a cluster member pin, called medoid.
Medoid by definition is a member of the user's originally interacted pin set and hence avoids the pit-fall of topic drift and is robust to outliers.
From a systems perspective, medoid is a concise way of representing a cluster as it only requires storage of medoid's pin id and also leads to cross-user and even cross-application cache sharing.

\xhdr{Design Choice 4: Medoid Sampling for Candidate Retrieval}
\our provides a rich representation of a user via cluster medoids. However, in practice we cannot use all the medoids simultaneously for candidate retrieval due to cost concerns. Additionally, the user would be bombarded with too many different items. Due to cost concerns, we sample 3 medoids proportional to their importance scores (computation described in later sections) and recommend their nearest neighboring pins.
The importance scores of medoids are updated daily and they can adapt with changing tastes of the user.

\xhdr{Design Choice 5: Two-pronged Approach for Handling Real-Time Updates}
It is important for a recommender system to adapt to the current needs of their users.
At the same time an accurate representation of users requires looking at their past 60-90 day activities.
Sheer size of the data and the speed at which it grows makes it hard to consider both aspects together.
Similar to~\cite{airbnb}, we address this issue by combining two methods: (a) a daily batch inference job that infers multiple medoids per user based on their long-term interaction history, and (b) an online version of the same model that infers medoids based on the users' interactions on the current day.
As new activity comes in, only the online version is updated.
At the end of the day, the batch version consumes the current day's activities and resolves any inconsistencies.
This approach ensures that our system adapts quickly to the users' current needs and at the same time does not compromise on their long-term interests.




\xhdr{Design Choice 6: Approximate Nearest Neighbor System}
To generate embeddings based recommendations, we employ an approximate nearest neighbor (ANN) system.
Given a query (medoid), the ANN system fetches $k$ pins closest to the query in the embedding space.
We show how several improvements to the ANN system, such as filtering low quality pins, careful selection of different indexing techniques, caching of medoids; results in the final production version having 1/10 the cost of the original prototype.



\section{Our Approach}
\label{section:method}

\xhdr{Notations}
Let the set $\mathcal{P} =\{P_1, P_2, \ldots\}$ represent all the pins at Pinterest. The cardinality of $\mathcal{P}$ is in order of billions.
Here, $P_i \in \mathbb{R}^d$ denotes the $d$-dimensional PinSage embedding of the $i^{th}$ pin.
Let $\mathcal{A}_u = \{a_1, a_2, \ldots \}$ be the sequence of action pins of user $u$, such that for each $a \in \mathcal{A}_u$, user either \textit{repinned}, or \textit{clicked} pin $P_{a}$ at time $\mathcal{T}_u[a]$.
For the sake of simplicity, we drop the subscript $u$ as we formulate for a single user $u$, unless stated otherwise. 
We consider the action pins in $\mathcal{A}$ to be sorted based on action time, such that $a_1$ is the pin id of the first action of the user.

\xhdr{Main Assumption: Pin Embeddings are Fixed}
As mentioned in our design choice 1 (Section~\ref{sec:designchoices}), we consider pin embeddings to be fixed and generated by a black-box model. 
Within Pinterest, this model is PinSage~\cite{pinsage} that is trained to place similar pins nearby in the embedding space with the objective of subsequent retrieval.
This assumption is ideal in our setting as it considerably simplifies the complexity of our models.
We also made a similar assumption in our prior work~\cite{htcn}.

\xhdr{Main Goal}
Our main goal is to infer multiple embeddings for each user, $\mathcal{E} = \{e_1, e_2, \ldots \}$, where $e_i \in \mathbb{R}^d$ for all $i$, given a user's actions $\mathcal{A}$ and pins embeddings $\mathcal{P}$.
Since pin embeddings are fixed and hence \textit{not} jointly inferred, we seek to learn each $e_i$ compatible with pin embeddings -- \textit{specifically for the purpose of retrieving similar pins to $e_i$}.
For different users, the number of embeddings can be different, i.e. $|\mathcal{E}_u|$ need \textit{not} be same as $|\mathcal{E}_v|$.
However, for our approach to be practically feasible, we \textit{require the number of embeddings to be in order of tens to hundreds} ($|\mathcal{E}| << |\mathcal{P}|$). \\

\noindent To show the promise of the clustering-based approach, we consider a task of predicting the next user action.
We have access to the user's past actions $\{a_1, a_2, \ldots, a_i\}$ and our goal is to predict the next action $a_{i+1}$ that the user is going to interact with from a corpus of billions of pins.
To simplify this challenge, we measure the performance by asking how often is the distance between the user-embedding and the pin embedding $P_{a_{i+1}}$ is above a cosine threshold of $0.8$.
We compare four single embedding approaches:
\begin{enumerate}
    \item \textit{Last pin}: User representation is the embedding of the last action pin ($P_{a_i}$).
    \item \textit{Decay average}: User representation is a time-decayed average of embeddings of their action pins. Specifically, decay average embedding $\propto \sum_{a \in \mathcal{A}} e^{-\lambda (\mathcal{T}_{now} - \mathcal{T}[a])} \cdot P_a$.
    \item \textit{Oracle}: Oracle can ``look into the future'' and pick as the user representation the past action pin of the user that is closest to $a_{i+1}$. This measures the upper bound on accuracy of a method that would predict future engagements based on past engagements.
    \item \textit{Kmeans Oracle}: User is represented via k-means clustering ($k=3$) over their past action pins. Again, the Oracle gets to see pin $a_{i+1}$ and picks as the user representation the cluster centroid closest to it.
\end{enumerate}

\begin{table}[t]
\begin{center}
\caption{Accuracy lift of models on predicting the next user action. The lifts are computed w.r.t. last pin model.}
\label{tab:oracleexp}
\begin{tabular}{l|c}
\textbf{Models} & \textbf{Accuracy Lift} \\
\hline
{Last pin} & 0\% \\
{Decay average} & 25\% \\
{Kmeans Oracle} & 98\% \\
{Oracle} (\textit{practical upper-bound}) & \textbf{140\%}
\end{tabular}
\end{center}
\end{table}

\begin{figure*}[t]
\centering
\includegraphics[scale=0.48]{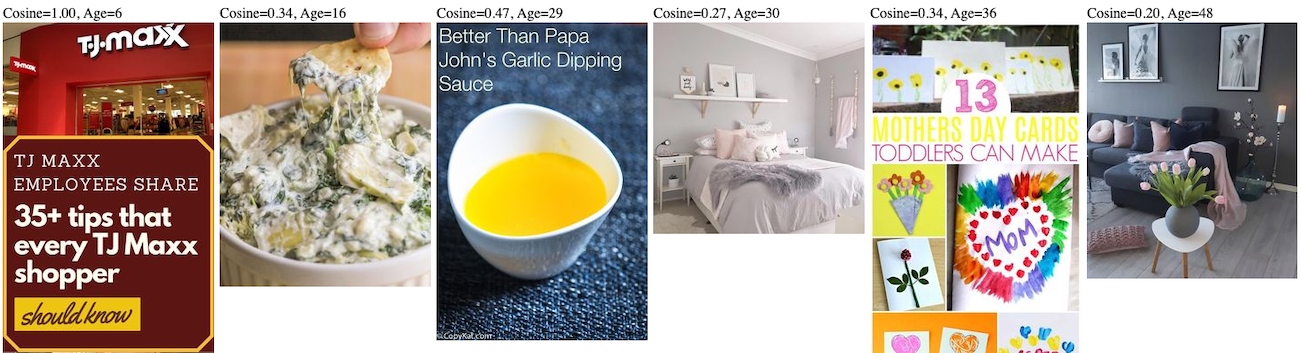}
\caption{Snapshot of action pins (\textit{repins} or \textit{clicks}) of a random user.
Cosine score for a pin is the cosine similarity between its embedding and the latest pin.
Age of a pin is the number of days elapsed from the action date to the data collection date.}
\label{fig:activities}
\end{figure*}

Table~\ref{tab:oracleexp} shows the results. Oracle model provides substantial accuracy gains, deservedly so as it can look at the future pin.
However, its superior performance is only possible because it is able to recall the embedding of all the pins (obviously not practical from a systems point-of-view).
Interestingly, a clustering based Oracle that only has to recall $3$ cluster centroid embeddings improves over the baselines by a large margin. 
This result is not entirely surprising because users have multiple interests and they often switch between those interests.
Figure~\ref{fig:activities} depicts such an example, which is replete in our setting.
We note that none of the past 5 pins correlate with the latest pin and one has to look further back to find stronger correlations.
Hence, single embedding models with limited memory fail at this challenge.

\subsection{PinnerSage}
We draw two key insights from the previous task: (i)
It is too limiting to represent a user with a single embedding, and (ii) Clustering based methods can provide a reasonable trade-off between accuracy and storage requirement.
These two observations underpin our approach, which has the following three components.
\begin{enumerate}
\item Take users' action pins from the last 90 days and cluster them into a small number of clusters.
\item Compute a medoid based representation for each cluster.
\item Compute an importance score of each cluster to the user.
\end{enumerate}


\subsubsection{\textbf{Step 1: Cluster User Actions}}
\label{section:clustering}
We pose two main constraints on our choice of clustering methods.
\begin{itemize}
    \item The clusters should \textit{only} combine conceptually similar pins.
    \item It should automatically determine the number of clusters to account for the varying number of interests of a user.
\end{itemize}
The above two constraints are satisfied by Ward~\cite{ward}, which is a hierarchical agglomerative clustering method, that is based on a minimum variance criteria (satisfying our constraint 1).
Additionally, the number of clusters is automatically determined based on the distances (satisfying our constraint 2).
In our experiments (sec.~\ref{section:experiment}), it performed better than K-means and complete linkage methods~\cite{clink}.
Several other benchmark tests also establish the superior performance of Ward over other clustering methods \footnote{\url{https://jmonlong.github.io/Hippocamplus/2018/02/13/tsne-and-clustering/}}.


\begin{algorithm}[t]
    \caption{Ward($\mathcal{A} = \{a_1, a_2, \ldots \}, \alpha$)}
    \label{algo:WardClustering}
    \SetKwInOut{Input}{Input}
    \SetKwInOut{Output}{Output}
    \Input{$\mathcal{A}$ - action pins of a user \\ $\alpha$ - cluster merging threshold}
    \Output{Grouping of input pins into clusters}

    // Initial set up: each pin belongs to its own cluster\\
    Set $C_i \leftarrow \{ i \}, \forall i \in \mathcal{A}$\\
    Set $d_{ij} \leftarrow ||P_i - P_j||^2_2, \forall i, j \in \mathcal{A}$\\
    $merge\_history = []$\\
    $stack = []$ \\
    \While{$|\mathcal{A}| > 1$} {
        // put first pin from $\mathcal{A}$ (without removing it) to the stack \\
        $stack.add(\mathcal{A}.first())$\\
        \While{$|stack| > 0$} {
        	   $i \leftarrow  stack.top()$\\
	   $J \leftarrow \{j : d_{ij} = \min_{j \neq i, j \in \mathcal{A}} \{d_{ij}\} \} $\\
	   $merged = False$ \\
	   \If{$|stack| \geq 2$} {
	   $j = stack.second\_from\_top()$ \\
	   \If {$j \in J$} {
	   	// merge clusters $i$ and $j$ \\
	   	$stack.pop()$; stack.pop(); //remove $i$ and $j$ \\
	   	merge\_history.add($C_i \leftarrow C_j, d_{ij}$) \\
	   	$\mathcal{A} \leftarrow \mathcal{A} - \{j\}$ \\
		Set $d_{ik} \leftarrow d(C_i \cup C_j, C_k)$, $\forall k \in \mathcal{A}, k \neq i$ \\
		$C_i \leftarrow C_i \cup C_j$ \\
		$merged = True$
	   }
	   }
	   
	   \If {$merged = False$} {
	   // push first element of $J$ in the stack \\
	   $stack.push(J.first())$ 
	   }
    }
}
Sort tuples in $merge\_history$ in decreasing order of $d_{ij}$
    
    Set $\mathbb{C} \leftarrow \{\}$\\
    \ForEach{$(C_i \leftarrow C_j, d_{ij}) \in $ merge\_history}{
    \If{$d_{ij} \leq \alpha$ and $ \{C_i \cup C_j\}  \cap C = \varnothing, \forall C \in \mathbb{C}$} {
    	// add $C_i \cup C_j$ to the set of clusters \\
	Set $\mathbb{C} \leftarrow \mathbb{C} \cup \{C_i \cup C_j\}$
    }
    }
\KwRet $\mathbb{C}$
\end{algorithm}

Our implementation of Ward is adapted from the Lance-Williams algorithm~\cite{lance67general} which provided an efficient way to do the clustering. 
Initially, Algorithm~\ref{algo:WardClustering} assigns each pin to its own cluster. 
At each subsequent step, two clusters that lead to a minimum increase in within-cluster variance are merged together.
Suppose after some iterations, we have clusters $\{C_1, C_2, \ldots \}$ with distance between clusters $C_i$ and $C_j$ represented as $d_{ij}$.
Then if two clusters $C_i$ and $C_j$ are merged, the distances are updated as follows:
\begin{equation}
 d(C_i \cup C_j, C_k) = \frac{(n_i + n_k) \ d_{ik} + (n_j + n_k) \ d_{jk} - n_k \ d_{ij}}{n_i + n_j + n_k} 
     \label{eq:wardminvar}
\end{equation}
where $n_i = |C_i|$ is the number of pins in cluster $i$.

\xhdr{Computational Complexity of Ward Clustering Algorithm}
The computational complexity of Ward clustering is $\mathbb{O}(m^2)$, where $m = |\mathcal{A}|^2$. 
To see this, we note that in every outer while loop a cluster is added to the \textit{empty} stack. 
Now since a cluster cannot be added twice to the stack (see Appendix, Lemma~\ref{lem:ward}), the algorithm has to start merging the clusters once it cycles through all the $m$ clusters (worst case). 
The step that leads to addition of a cluster to the stack or merging of two clusters has a computational cost of $\mathbb{O}(m)$.
The algorithm operates with $m$ \textit{initial clusters} and then $m-1$ \textit{intermediate merged clusters} as it progresses, leading to the total computational complexity of $\mathbb{O}(m^2)$.

\subsubsection{\textbf{Step 2: Medoid based Cluster Representation}}
After a set of pins are assigned to a cluster, we seek a compact representation for that cluster.
A typical approach is to consider cluster centroid, time decay average model or a more complex sequence models such as LSTM, GRU, etc.
However one problem with the aforementioned techniques is that the embedding inferred by them could lie in a very different region in the $d$-dimensional space.
This is particularly true if there are outlier pins assigned to the cluster, which could lead to large with-in cluster variance.
The side-effect of such an embedding would be retrieval of non-relevant candidates for recommendation as highlighted by Figure~\ref{fig:curse_of_single_embedding}.

We chose a more robust technique that selects a cluster member pin called as \textit{medoid} to represent the cluster.
We select the pin that minimizes the sum of squared distances with the other cluster members.
Unlike centroid or embeddings obtained by other complex models, medoid by definition is 
a point in the $d$-dimensional space that coincides with one of the cluster members.
Formally,
\begin{equation}
\mbox{$embedding(C) \leftarrow P_m$, where } m = \arg\min_{m \in C} \sum_{j \in C} ||P_m - P_j ||^2_2
\end{equation}
An additional benefit of medoid is that we only need to store the index $m$ of the medoid pin as its embedding can be fetched on demand from an auxiliary key-value store.

\begin{algorithm}[t]
    \caption{\our ($\mathcal{A}, \alpha, \lambda$)}
    \label{algo:pinnersage}
    \SetKwInOut{Input}{Input}
    \SetKwInOut{Output}{Output}
    Set $\mathbb{C} \leftarrow \mbox{Ward}(\mathcal{A}, \alpha)$ \\
    \ForEach{$C \in \mathbb{C}$}{
        Set $medoid_C \leftarrow \arg\min_{m \in C} \sum_{j \in C} ||P_m - P_j ||^2_2$ \\
        Set $importance_C \leftarrow \sum_{i \in C} e^{-\lambda (\mathcal{T}_{now} - \mathcal{T}[i])}$
    }
\KwRet $\{{medoid_C}, importance_C : \forall C \in \mathbb{C} \}$
\end{algorithm}

\subsubsection{\textbf{Step 3: Cluster Importance}} \label{model:clusterimportance}
Even though the number of clusters for a user is small, it can still be in order of tens to few hundreds. 
Due to infra-costs, we cannot utilize all of them to query the nearest neighbor system; making it essential to identify the relative importance of clusters to the user so we can sample the clusters by their importance score.
We consider a time decay average model for this purpose:
\begin{equation}
Importance(C, \lambda) = \sum_{i \in C} e^{-\lambda (\mathcal{T}_{now} - \mathcal{T}[i])} 
\end{equation}
where $\mathcal{T}[i]$ is the time of action on pin $i$ by the user.
A cluster that has been interacted with frequently and recently will have higher importance than others. 
Setting $\lambda=0$ puts more emphasis on the frequent interests of the user, whereas $\lambda=0.1$ puts more emphasis on the recent interests of the user. We found $\lambda=0.01$ to be a good balance between these two aspects.

Algorithm~\ref{algo:pinnersage} provides an end-to-end overview of \our model.
We note that our model operates independently for each user and hence it can be implemented quite \textit{efficiently in parallel} on a MapReduce based framework.
We also maintain an online version of \our that is run on the most recent activities of the user. 
The output of the batch version and the online version are combined together and used for generating the recommendations.

\section{PinnerSage Recommendation System}
\label{section:system}
\our can infer as many medoids for a user as the underlying data supports. This is great from a user representation point of view, however not all medoids can be used simultaneously at any given time for generating recommendations.
For this purpose, we consider importance sampling. 
We sample a maximum of $3$ medoids per user at any given time. The sampled medoids are then used to retrieve candidates from our nearest-neighbor system.
Figure~\ref{fig:system_graph} provides an overview of \our recommendation system.

\begin{figure}[t]
\centering
\includegraphics[scale=0.31]{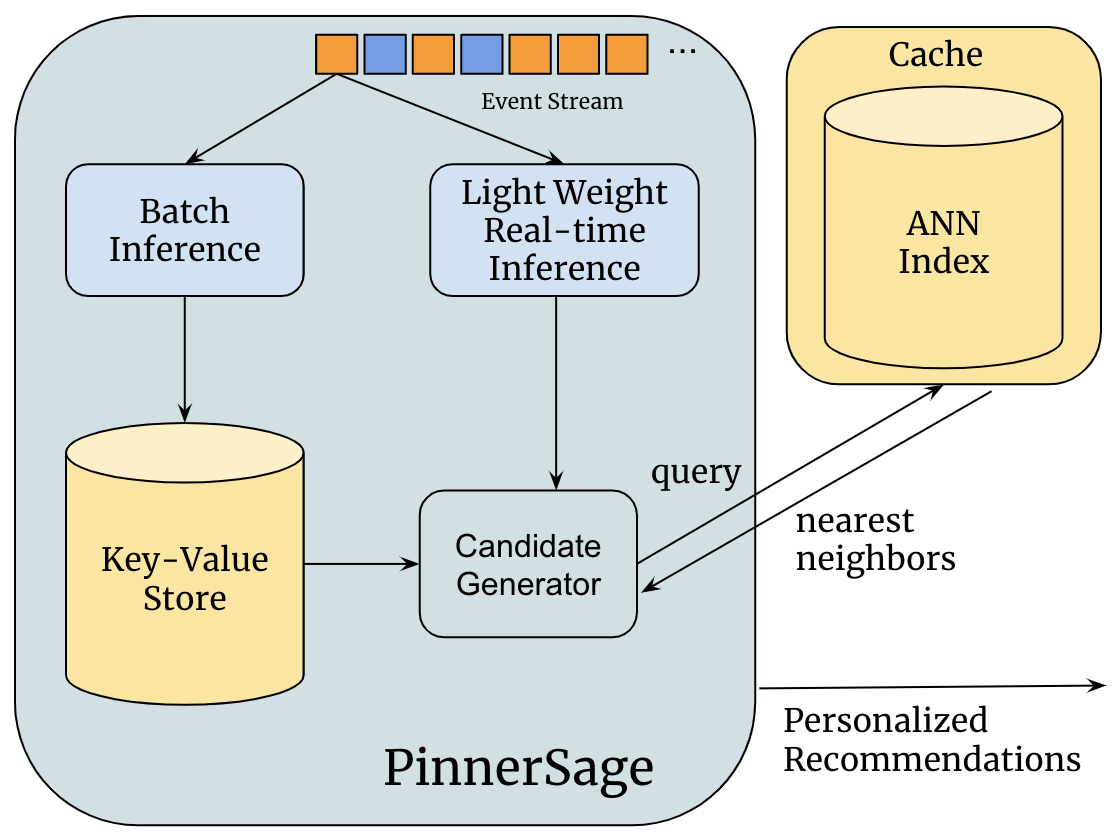}
\caption{PinnerSage Recommendation System.}
\label{fig:system_graph}
\end{figure}

\subsection{Approximate Nearest Neighbor Retrieval}
\label{section:system_nearest_neighbors}
Within Pinterest, we have an approximate nearest neighbor retrieval system (ANN) that maintains an efficient index of pin embeddings enabling us to retrieve similar pins to an input query embedding.
Since it indexes billions of pins, ensuring that its infrastructure cost and latency is within the internally prescribed limits is an engineering challenge.
We discuss a few tricks that have helped ANN become a first-class citizen alongside other candidate retrieval frameworks, such as Pixie~\cite{pixie}.

\subsubsection{\textbf{Indexing Scheme}}
Many different embedding indexing schemes (see~\cite{faiss}) were evaluated, such as LSH Orthoplex \cite{lsh, terasawa2007spherical}, Product Quantization \cite{babenko2016efficient}, HNSW \cite{malkov2018efficient}, etc.
We found HNSW to perform best on cost, latency, and recall.
Table~\ref{tab:system_opt_gains} shows that HNSW leads to a significant cost reduction over LSH Orthoplex. 
Superior performance of HNSW has been reported on several other benchmark datasets as well \footnote{\url{https://erikbern.com/2018/06/17/new-approximate-nearest-neighbor-benchmarks.html}}. 

\xhdr{Candidate Pool Refinement}
A full index over billions of pins would result in retrieving many near-duplicates.
These near duplicates are not desirable from recommendation purposes as there is limited value in presenting them to the user.
Furthermore, some pins can have intrinsically lower quality due to their aesthetics (low resolution or large amount of text in the image). 
We filter out near duplicates and lower quality pins via specialized in-house models.
Table~\ref{tab:system_opt_gains} shows that index refinement leads to a significant reduction in serving cost. 


\xhdr{Caching Framework}
All queries to the ANN system are formulated in pin embedding space.
These embeddings are represented as an array of $d$ floating point values that are not well suited for caching.
On the other-hand, medoid's pin id is easy to cache and can reduce repeated calls to the ANN system.
This is particularly true for popular pins that appear as medoids for multiple users.
Table~\ref{tab:system_opt_gains} shows the cost reduction of using medoids over centroids.

\begin{table}[t]
  \begin{center}
    \caption{Relative Cost Benefits of Optimization Techniques.}
    \label{tab:system_opt_gains}      
    \begin{tabular}{rcl|c}
        \multicolumn{3}{c|}{\textbf{Optimization Technique}} & \textbf{Cost} \\ \hline
      LSH Orthoplex & $ \rightarrow $ & HNSW & -60\% \\
      Full Index & $ \rightarrow $ & Index Refinement & -50\% \\
      Cluster Centroid & $ \rightarrow $ & Medoid & -75\%
    \end{tabular}
  \end{center}
\end{table}

\subsection{Model Serving}
\label{section:online_serving}
The main goal of \our is to recommend relevant content to the users based on their past engagement history.
At the same time, we wish to provide recommendations that are relevant to actions that a user is performing in the real-time.
One way to do this is by feeding all the user data to \our and run it as soon as user takes an action. However this is practically not feasible due to cost and latency concerns:
We consider a two pronged approach:
\begin{enumerate}
    \item \textit{Daily Batch Inference}: \our is run daily over the last 90 day actions of a user on a MapReduce cluster. The output of the daily inference job (list of medoids and their importance) are served online in key-value store.
    \item \textit{Lightweight Online Inference}: We collect the most recent 20 actions of each user on the latest day (after the last update to the entry in the key-value store) for online inference. \our uses a real-time event-based streaming service to consume action events and update the clusters initiated from the key-value store.
\end{enumerate}


In practice, the system optimization plays a critical role in enabling the productionization of \our. Table \ref{tab:system_opt_gains} shows a rough estimation of cost reduction observed during implementation. 
While certain limitations are imposed in the PinnerSage framework, such as a two pronged update strategy, the architecture allows for easier improvements to each component independently.

\section{Experiment}
\label{section:experiment}
Here we evaluate \our and empirically validate its performance.
We start with qualitative assessment of \our, followed by A/B experiments and followed by extensive offline evaluation.


\begin{figure*}[t]
\centering
\includegraphics[scale=0.44]{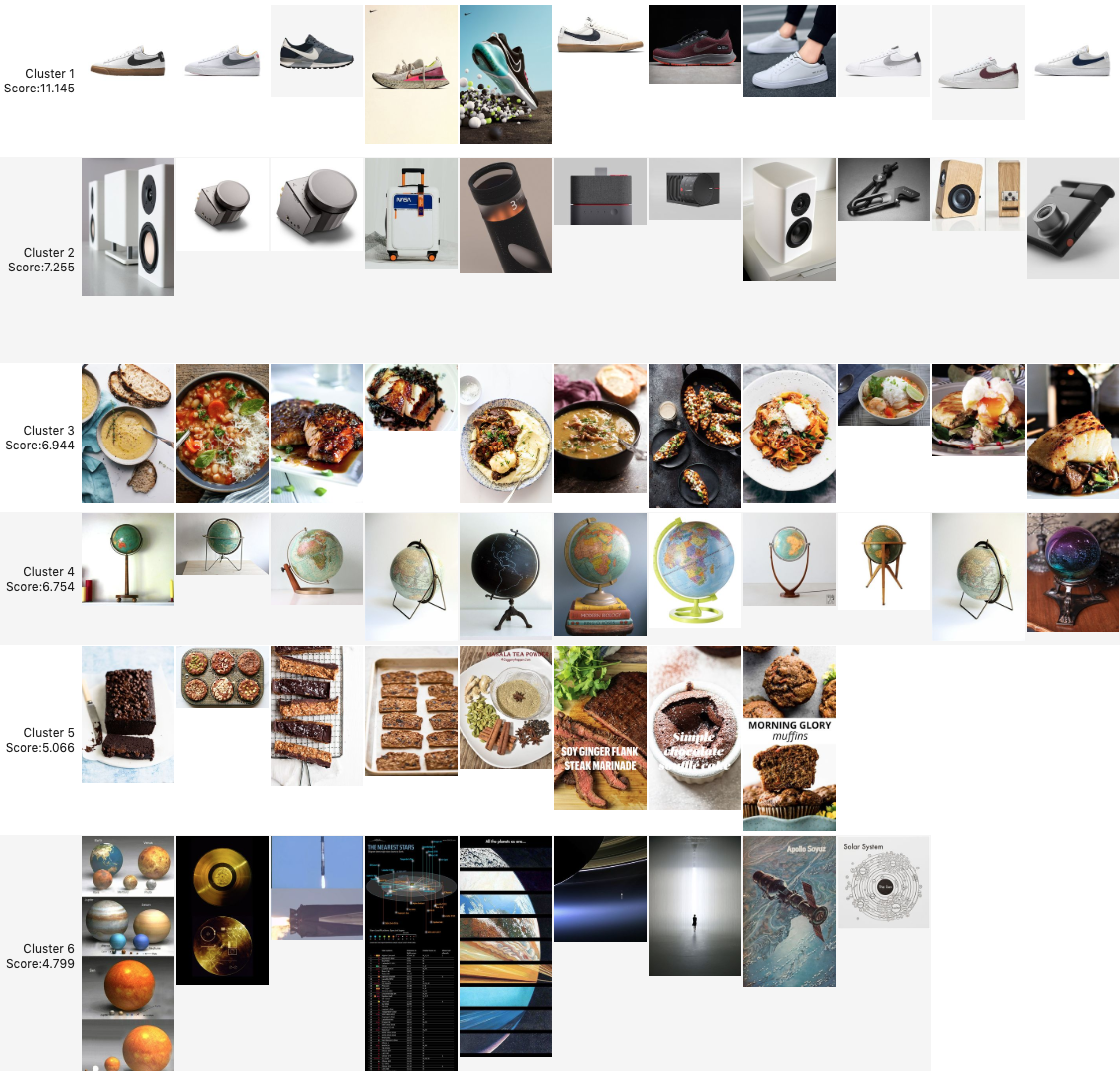}
\caption{\our clusters of an anonymous user.}
\label{fig:cluster_visualized}
\end{figure*}

\begin{figure*}[t]
\centering
\includegraphics[scale=0.35]{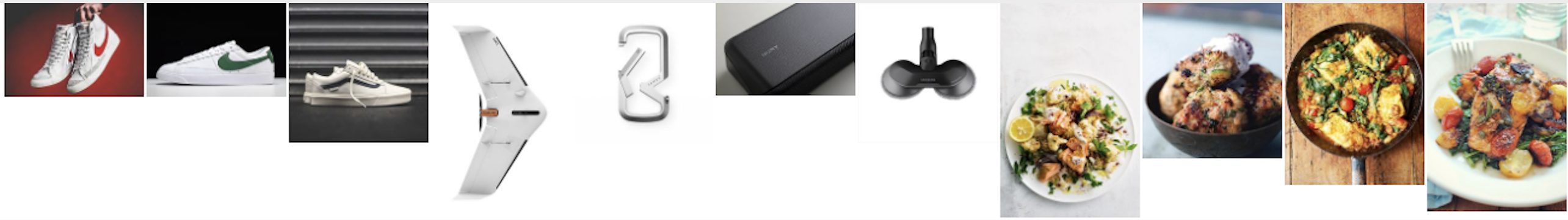}
\caption{Sample recommendations generated by \our for the top $3$ clusters~\ref{fig:cluster_visualized}.}
\label{fig:pinnersage_examples}
\end{figure*}

\subsection{\our Visualization}
Figure \ref{fig:cluster_visualized} is a visualization of \our clusters for a given user.
As can be seen, \our does a great job at generating conceptually consistent clusters by grouping only contextually similar pins together.
Figure \ref{fig:pinnersage_examples} provides an illustration of candidates retrieved by \our. 
The recommendation set is a healthy mix of pins that are relevant to the top three interests of the user: \textit{shoes}, \textit{gadgets}, and \textit{food}. Since this user has interacted with these topics in the past, they are likely to find this diverse recommendation set interesting and relevant. 

\begin{table}[t]
\begin{center}
\caption{A/B experiments across Pinterest surfaces. Engagement gain of \our vs current production system.}
\label{tab:abexp}
\begin{tabular}{l|c|c}
\textbf{Experiment} & \textbf{Volume} & \textbf{Propensity}\\ \hline
Homefeed  & +4\% & +2\% \\
Shopping  & +20\% & +8\% \\
\end{tabular}
\end{center}
\end{table}

\subsection{Large Scale A/B Experiments}
\label{section:online_experiments}
We ran large scale A/B experiments where users are randomly assigned either in control or experiment groups.
Users assigned to the experiment group experience \our recommendations, while users in control get recommendations from the single embedding model (decay average embedding of action pins). 
Users across the two groups are shown equal number of recommendations.
Table~\ref{tab:abexp} shows that \our provides significant engagement gains on increasing overall engagement volume (repins and clicks) as well as increasing engagement propensity (repins and clicks per user).
Any gain can be directly attributed to increased quality and diversity of \our recommendations.

\subsection{Offline Experiments}
\label{section:offline_experiments}
We conduct extensive offline experiments to evaluate \our and its variants w.r.t. baseline methods. 
We sampled a large set of users (\textit{tens of millions}) and collected their past activities (\textit{actions} and \textit{impressions}) in the last 90 days. 
All activities before the day $d$ are marked for training and from $d$ onward for testing.

\xhdr{Baselines} We compare \our with the following baselines: (a) single embedding models, such as last pin, decay avg with several choices of $\lambda$ ($0, 0.01, 0.1, 0.25$), LSTM, GRU, and HierTCN\cite{htcn}, (b) multi-embedding variants of \our with different choices of (i) clustering algorithm, (ii) cluster embedding computation methods, and (iii) parameter $\lambda$ for cluster importance selection.

Similar to \cite{htcn}, baseline models are trained with the objective of ranking user actions over impressions with several loss functions (l2, hinge, log-loss, etc). Additionally, we trained baselines with several types of negatives besides impressions, such as random pins, popular pins, and hard negatives by selecting pins that are similar to action pins.

\xhdr{Evaluation Method}
The embeddings inferred by a model for a given user are evaluated on future actions of that user.
Test batches are processed in chronological order, first day $d$, then day $d + 1$, and so on.
Once evaluation over a test batch is completed, that batch is used to update the models; mimicking a daily batch update strategy.

\subsubsection{\textbf{Results on Candidate Retrieval Task.}} \label{sec:evalret}
Our main use-case of user embeddings is to retrieve relevant candidates for recommendation out of a very large candidate pool (billions of pins).
The candidate retrieval set is generated as follows: Suppose a model outputs $e$ embeddings for a user, then $\lfloor \frac{400}{e}\rfloor$ nearest-neighbor pins are retrieved per embedding, and finally the retrieved pins are combined to create a recommendation set of size $\leq 400$ (due to overlap it can be less than 400).
The recommendation set is evaluated with the observed user actions from the test batch on the following two metrics:
\begin{enumerate}
\item \textbf{Relevance (\textit{Rel.})} is the proportion of observed action pins that have high cosine similarity ($\geq 0.8$) with any recommended pin.
Higher relevance values would increase the chance of user finding the recommended set useful. 
\item \textit{\textbf{Recall}} is the proportion of action pins that are found in the recommendation set.
\end{enumerate}

Table~\ref{tab:retrievaltask} shows that \our is more effective at retrieving relevant candidates across all baselines.
In particular, the single embedding version of \our is better than the state-of-art single embedding sequence methods.

Amongst \our variants, we note that Ward performs better than K-means and complete link methods. For cluster embedding computation, both sequence models and medoid selection have similar performances, hence we chose medoid as it is easier to store and has better caching properties.
Cluster importance with $\lambda=0$, which is same as counting the number of pins in a cluster, performs worse than $\lambda=0.01$ (our choice).
Intuitively this makes sense as higher value of $\lambda$ incorporates recency alongside frequency. 
However, if $\lambda$ is too high then it over-emphasize recent interests, which can compromise on long-term interests leading to a drop in model performance ($\lambda=0.1$ vs $\lambda=0.01$).

\begin{table}[t]
\caption{Lift relative to \textit{last pin model} for retrieval task.}
\label{tab:retrievaltask}
\begin{tabular}[t]{l|c|c}
& \textit{\textbf{Rel.}} & \textit{\textbf{Recall}} \\ \hline
Last pin model & 0\% & 0\% \\
Decay avg. model ($\lambda = 0.01$) & 28\% & 14\% \\
Sequence models (HierTCN) & 31\% & 16\% \\
\our(sample $1$ embedding) & 33\% & 18\% \\ \hline
\our(K-means(k=5)) & 91\% & 68\% \\
\our(Complete Linkage) & 88\% & 65\% \\
\our(embedding = Centroid) & 105\% & 81\% \\
\our(embedding = HierTCN) & \textbf{110\%} & \textbf{88\%} \\
\our(importance $\lambda=0$) & 97\% & 72\% \\
\our(importance $\lambda=0.1$) & 94\% & 69\% \\
\our(Ward, Medoid, $\lambda=0.01$) & \textbf{110\%} & \textbf{88\%}
\end{tabular}
\end{table}

\subsubsection{\textbf{Results on Candidate Ranking Task.}} 
\label{sec:evalrank}
A user embedding is often used as a feature in a ranking model especially to rank candidate pins.
The candidate set is composed of action and impression pins from the test batch. 
To ensure that every test batch is weighted equally, we randomly sample 20 impressions per action. In the case when there are less than 20 impressions in a given test batch, we add random samples to maintain the 1:20 ratio of actions to impressions.
Finally the candidate pins are ranked based on the decreasing order of their maximum cosine similarity with any user embedding.
A better embedding model should be able to rank actions above impressions.
This intuition is captured via the following two metrics:
\begin{enumerate}
\item \textbf{R-Precision (\textit{R-Prec.})} is the proportion of action pins in top-$k$, where $k$ is the number of actions considered for ranking against impressions. RP is a measure of signal-to-noise ratio amongst the top-$k$ ranked items.

\item \textbf{Reciprocal Rank (\textit{Rec. Rank})} is the average reciprocal rank of action pins. It measures how high up in the ranking are the action pins placed.
\end{enumerate}

Table~\ref{tab:rankingtask} shows that \our significantly outperforms the baselines, indicating the efficacy of user embeddings generated by it as a stand-alone feature.
With regards to single embedding models, we make similar observations as for the retrieval task: single embedding \our infers a better embedding.
Amongst \our variants, we note that the ranking task is less sensitive to embedding computation and hence centroid, medoid and sequence models have similar performances as the embeddings are only used to order pins.
However it is sensitive to cluster importance scores as that determines which $3$ user embeddings are picked for ranking.

\begin{table}[t]
\caption{Lift relative to \textit{last pin model} for ranking task.}
\label{tab:rankingtask}
\begin{tabular}[t]{l|c|c}
& \textit{\textbf{R-Prec.}} & \textit{\textbf{Rec. Rank}} \\ \hline
Last pin model & 0\% & 0\% \\
Decay avg. model ($\lambda = 0.01$ & 8\% & 7\% \\
Sequence models (HierTCN) & 21\% & 16\% \\
\our(sample $1$ embedding) & 24\% & 18\% \\ \hline
\our(Kmeans(k=5)) & 32\% & 24\% \\
\our(Complete Linkage) & 29\% & 22\% \\
\our(embedding = Centroid) & \textbf{37\%} & \textbf{28\%} \\
\our(embedding = HierTCN) & \textbf{37\%} & \textbf{28\%} \\
\our(importance $\lambda=0$) & 31\% & 24\% \\
\our(importance $\lambda=0.1$) & 30\% & 24\% \\
\our (Ward, Medoid, $\lambda=0.01$) & \textbf{37\%} & \textbf{28\%}
\end{tabular}
\end{table}

\begin{figure}[b]
\includegraphics[scale=0.3]{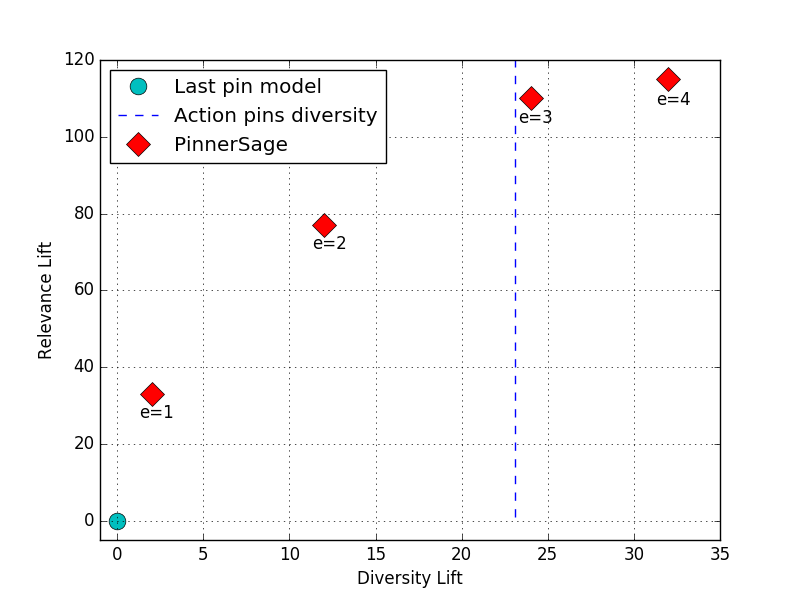}
\caption{Diversity relevance tradeoff when different number of embeddings ($e$) are selected for candidate retrieval.}
\label{fig:divrel}
\end{figure}

\subsubsection{\textbf{Diversity Relevance Tradeoff.}}
Recommender systems often have to trade between relevance and diversity~\cite{div}.
This is particularly true for single embedding models that have limited focus.
On the other-hand a multi-embedding model offers flexibility of covering disparate user interests simultaneously.
We define diversity to be average pair-wise \textit{cosine distance} between the recommended set.

Figure~\ref{fig:divrel} shows the diversity/relevance lift w.r.t. last pin model.
We note that by increasing $e$, we increase both relevance and diversity.
This intuitively makes sense as for larger $e$, the recommendation set is composed of relevant pins that span multiple interests of the user.
For $e > 3$ the relevance gains tapers off as users activities do not vary wildly in a given day (on average). 
Infact, for $e=3$, the recommendation diversity achieved by \our matches closely with action pins diversity, which we consider as a sweet-spot.

\section{Related Work}
\label{section:related_work}
There is an extensive research work focused towards learning embeddings for users and items, for e.g., \cite{baba2019embedding_retrieval, dlyoutube, widedeep, starspace, htcn}.
Much of this work is fueled by the models proposed to learn word representations, such as Word2Vec model~\cite{word2vec} that is a highly scalable continuous bag-of-words (CBOW) and skip-gram (SG) language models.
Researchers from several domains have adopted word representation learning models for several problems such as for recommendation candidate ranking in various settings, for example for movie, music, job, pin recommendations~\cite{music, job, movie, htcn}.
There are some recent papers that have also focused on candidate retrieval. \cite{widedeep} mentioned that the candidate retrieval can be handled by a combination of machine-learned models and human-defined rules; 
\cite{baba2019embedding_retrieval} considers large scale candidate generation from billions of users and items, and proposed a solution that pre-computes hundreds of similar items for each embedding offline. 
\cite{dlyoutube} has discussed a candidate generation and retrieval system in production based on a single user embedding.


Several prior works \cite{microprofile, westonmultiintr, alibabaKdd2019} consider modeling users with multiple embeddings. \cite{microprofile} uses multiple time-sensitive contextual profile to capture user's changing interests. \cite{westonmultiintr} considers max function based non-linearity in factorization model, equivalently uses multiple vectors to represent a single user, and shows an improvement in 25\% improvement in YouTube recommendations. \cite{alibabaKdd2019} uses polysemous embeddings (embeddings with multiple meanings) to improve node representation, but it relies on an estimation of the occurrence probability of each embedding for inference. Both \cite{westonmultiintr, alibabaKdd2019} report results on offline evaluation datasets.
Our work complements prior work and builds upon them to show to operationalize a rich multi-embedding model in a production setting.

\section{Conclusion}
In this work, we presented an end-to-end system, called PinnerSage, that powers personalized recommendation at Pinterest. 
In contrast to prior production systems that are based on a single embedding based user representation, \our proposes a multi-embedding based user representation scheme. 
Our proposed clustering scheme ensures that we get full insight into the needs of a user and understand them better.
To make this happen, we adopt several design choices that allows our system to run efficiently and effectively, such as medoid based cluster representation and importance sampling of medoids.
Our offline experiments show that our approach leads to significant relevance gains for the \textit{retrieval task}, as well as delivers improvement in \textit{reciprocal rank} for the \textit{ranking task}.
Our large A/B tests show that \our provides significant real world online gains.
Much of the improvements delivered by our model can be attributed to its better understanding of user interests and its quick response to their needs.
There are several promising areas that we consider for future work, such as selection of multiple medoids per clusters and a more systematic reward based framework to incorporate implicit feedback in estimating cluster importance.


\section{Acknowledgements}
We would like to extend our appreciation to Homefeed and Shopping teams for helping in setting up online A/B experiments.
Our special thanks to the embedding infrastructure team for powering embedding nearest neighbor search.

%
\bibliographystyle{abbrv}
\bibliography{references}
%
%



\newpage
\section*{Reproducibility Supplementary Materials}
\subsection*{APPENDIX A: Convergence proof of Ward clustering algorithm}
\label{section:appendix}
\begin{lemma}
In Algo.~\ref{algo:WardClustering}, a merged cluster $C_i \cup C_j$ cannot have distance lower to another cluster $C_k$ than the lowest distance of its children clusters to that cluster, i.e., 
$d(C_i \cup C_j, C_k) \geq \min \{ d_{ik}, d_{jk} \}$.
\label{lem:dist}
\end{lemma}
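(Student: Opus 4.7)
The plan is to use the Lance--Williams recurrence from Eq.~(\ref{eq:wardminvar}) and reduce the claim to a non-negativity check that follows from the mutual-nearest-neighbour condition under which Algorithm~\ref{algo:WardClustering} actually performs a merge. First, without loss of generality I would assume $d_{ik} \le d_{jk}$, so that $\min\{d_{ik}, d_{jk}\} = d_{ik}$. Substituting the update rule, the inequality to prove becomes
$$\frac{(n_i+n_k)\,d_{ik} + (n_j+n_k)\,d_{jk} - n_k\,d_{ij}}{n_i+n_j+n_k} \;\ge\; d_{ik}.$$

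Next, I would clear the positive denominator $n_i+n_j+n_k$ and cancel the common $(n_i+n_k)\,d_{ik}$ contribution, reducing the goal to the equivalent statement
$$n_j\,(d_{jk}-d_{ik}) + n_k\,(d_{jk}-d_{ij}) \;\ge\; 0.$$
The first summand is non-negative by the WLOG ordering. For the second, I would invoke the algorithmic invariant: Algorithm~\ref{algo:WardClustering} only merges $C_i$ and $C_j$ when they sit at the top of the stack as mutual nearest neighbours, which means $d_{ij} \le d_{i\ell}$ and $d_{ij} \le d_{j\ell}$ for every other currently active cluster $\ell$. In particular $d_{ij} \le d_{jk}$, so the second summand is non-negative as well, which closes the argument.

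The main obstacle is conceptual rather than computational: the bound is \emph{not} a property of the Lance--Williams formula in isolation. If $d_{ij}$ were allowed to exceed $\min\{d_{ik}, d_{jk}\}$, the update can easily dip below $\min\{d_{ik}, d_{jk}\}$ and even go negative (for instance $n_i=n_j=n_k=1$, $d_{ik}=d_{jk}=1$, $d_{ij}=10$ yields $-2$). I would therefore make the appeal to the mutual-nearest-neighbour merging rule explicit in the write-up, since that is precisely what rules out such pathological inputs. Once that invariant is flagged, the remainder is routine algebraic manipulation.
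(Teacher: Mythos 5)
Your proof is correct and is essentially the paper's own argument: both substitute the merge preconditions $d_{ij}\le d_{ik}$, $d_{ij}\le d_{jk}$ into the Lance--Williams update and reduce the claim to the non-negativity of $n_j\,(d_{jk}-d_{ik}) + n_k\,(d_{jk}-d_{ij})$ (the paper writes this same quantity as $n_k\,\gamma + (n_j+n_k)\,\delta$ after the change of variables $\gamma = d_{ik}-d_{ij}$, $\delta = d_{jk}-d_{ik}$). Your explicit remark that the bound fails without the mutual-nearest-neighbour precondition is a worthwhile sanity check but does not change the substance.
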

\begin{proof}[Proof]
For clusters $i$ and $j$ to merge, the following two conditions must be met: $d_{ij} \leq d_{ik}$ and $d_{ij} \leq d_{jk}$.
Without loss of generality, let $d_{ij} = d$ and $d_{ik} = d + \gamma$ and $d_{jk} = d + \gamma + \delta$, where $\gamma \geq 0, \delta \geq 0$.
We can simplify eq.~\ref{eq:wardminvar} as follows:
\begin{align}
d(C_i \cup C_j, C_k) &= \frac{(n_i + n_k) \ (d + \gamma) + (n_j + n_k) \ (d+\gamma + \delta) - n_k \ d}{n_i + n_j + n_k} \nonumber \\
&= d + \gamma + \frac{n_k \ \gamma + (n_j + n_k) \ \delta}{n_i+n_j+n_k} \geq d + \gamma
\label{eq:proofincrward}
\end{align}
$d(C_i \cup C_j, C_k) \geq d + \gamma$ implies $d(C_i \cup C_j, C_k) \geq \min\{d_{ik}, d_{jk}\}$.
\end{proof}

\begin{lemma}
A cluster cannot be added twice to the stack in Ward clustering (algo.~\ref{algo:WardClustering}).\label{lem:ward}
\end{lemma}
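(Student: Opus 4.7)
The plan is to extract a strictly decreasing distance chain along the stack and combine it with Lemma~\ref{lem:dist} to obstruct any second push of the same cluster.

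First, I would establish the following invariant: at any moment, if the stack contains $c_1, c_2, \ldots, c_k$ from bottom to top, then
\begin{equation*}
d(c_1, c_2) > d(c_2, c_3) > \cdots > d(c_{k-1}, c_k).
\end{equation*}
This holds because when $c_{j+1}$ was pushed on top of $c_j$ (with second-from-top $c_{j-1}$), the algorithm pushed the first element of $c_j$'s nearest-neighbor set $J$ only because $c_{j-1} \notin J$; hence $d(c_j, c_{j-1}) > d(c_j, c_{j+1})$. These values persist because elements deeper in the stack are never touched by merges, which only involve the top two positions.

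Second, I would use Lemma~\ref{lem:dist} to argue that for any $c_j$ currently on the stack, every currently existing cluster $c \in \mathcal{A} \setminus \{c_j\}$ satisfies $d(c_j, c) \geq d(c_j, c_{j+1})$. At the instant $c_{j+1}$ was pushed, it was a nearest neighbor of $c_j$ in $\mathcal{A}$, so the bound held for every cluster present then. Every cluster now in $\mathcal{A}$ is either one that already existed (distance unchanged, since $c_j$ itself has not been merged) or is obtained from such clusters by a sequence of merges, in which case Lemma~\ref{lem:dist} applied inductively over those merges preserves the lower bound $d(c_j, c_{j+1})$.

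Finally, I would assume for contradiction that a cluster is about to be added to the stack for a second time. Since a cluster only leaves the stack through a merge, after which it ceases to exist as itself in $\mathcal{A}$, the candidate for re-addition must still be on the stack; call it $c_j$, and say we are about to push $c_{k+1} = \mathrm{NN}(c_k) = c_j$. The case $j = k$ is impossible, and $j = k-1$ is ruled out directly by the algorithm, which would merge $c_{k-1}$ and $c_k$ rather than push. For $j \leq k-2$, chaining the two invariants yields
\begin{equation*}
d(c_j, c_k) \;\geq\; d(c_j, c_{j+1}) \;>\; d(c_{k-1}, c_k) \;\geq\; d(c_k, c_{k+1}) \;=\; d(c_k, c_j),
\end{equation*}
contradicting symmetry of the distance. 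The main obstacle I expect is in the second step: one must verify that Lemma~\ref{lem:dist} extends inductively over the entire sequence of merges that occur while $c_j$ sits on the stack, so that no newly created cluster can pierce the lower bound $d(c_j, c_{j+1})$.
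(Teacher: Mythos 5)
Your proof is correct, and it rests on the same two ingredients as the paper's: the nearest-neighbor inequalities recorded at each push, and Lemma~\ref{lem:dist} to guarantee that clusters created by merges cannot undercut the distance bound to an element still sitting on the stack. The difference is in how the contradiction is assembled. The paper works directly with the specific stack configuration $[\ldots, i, j, k, i]$, derives three pairwise conditions, and closes the cycle by symmetry (handling the tie case separately); as written, that argument only explicitly treats a repeat separated by two intermediate elements. You instead prove a structural invariant --- the consecutive distances along the stack form a strictly decreasing chain, with strictness coming from the fact that a tie with the second-from-top element would trigger a merge rather than a push --- and then chain $d(c_j, c_k) \geq d(c_j, c_{j+1}) > d(c_{k-1}, c_k) \geq d(c_k, c_j)$ for an arbitrary gap $j \leq k-2$. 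This buys you a uniform treatment of all cases (including the observation that a cluster leaving the stack ceases to exist as itself, so only on-stack duplicates need to be excluded), and it is the standard correctness argument for nearest-neighbor-chain clustering; the paper's version is shorter but covers the general configuration only implicitly. The one point you flagged as a possible obstacle --- propagating Lemma~\ref{lem:dist} inductively through every merge that occurs while $c_j$ is on the stack --- does go through, since merges only touch the top two stack positions, so $c_j$ and its recorded distances to untouched clusters are never perturbed, and each newly formed cluster inherits a distance to $c_j$ no smaller than the minimum over its constituents, which by induction is at least $d(c_j, c_{j+1})$.
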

\begin{proof}[Proof by contradiction]
Let the state of stack at a particular time be $[ \ldots, i, j, k, i ]$.
Since $j$ is added after $i$ in stack, this implies that $d_{ij} \leq d_{ik}$ (condition 1).
Similarly from subsequent additions to the stack, we get $d_{jk} \leq d_{ji}$ (condition 2) and $d_{ki} \leq d_{kj}$ (condition 3).
We also note that by symmetry $d_{xy} = d_{yx}$.
Combining condition 2 and 3 leads to $d_{ik} \leq d_{ij}$, which would contradict condition 1 unless $d_{ji} = d_{jk}$.
Since $i$ is the second element in the stack after addition of $j$, $j$ cannot add $k$ given $d_{ji} = d_{jk}$.
Hence $i$ cannot be added twice to the stack. 

Additionally, a merger of clusters since the first addition of $i$ to the stack, cannot add $i$ again.
This is because its distance to $i$ is greater than or equal to the smallest distance of its child clusters to $i$ by Lemma~\ref{lem:dist}.
Since the child cluster closest to $i$ cannot add $i$, so can't the merged cluster.
\end{proof}

\end{document}